\newcommand{\beq}[1]{\begin{equation} \eqlab{#1}}
\newcommand{\eeq}{\end{equation}}
\def\bal#1\eal{\begin{align}#1\end{align}}
\newcommand{\bsub}{\begin{subequations}}
\newcommand{\esub}{\end{subequations}}
\newcommand{\nn}{\nonumber}
\newcommand{\eqlab}[1]{\label{eq:#1}}
\renewcommand{\vec}[1]{\bm{#1}}
\newcommand{\mN}{\mathcal{N}}
\newcommand{\br}[1]{\left\lbrack #1 \right\rbrack}
\newcommand{\paren}[1]{\left(#1\right)}
\newcommand{\mr}[1]{\mathrm{#1}}
\newcommand{\vt}[1]{\left.#1\right\vert}
\newcommand{\yyy}{\vec{y}}
\newcommand{\zzz}{\vec{z}}
\DeclareMathOperator*{\argmin}{arg\,min}
\newcommand{\meanp}[2]{\mathbb{E}_{#1}\br{#2}}
\newcommand{\kl}[2]{\mr{KL}\paren{\vt{\vt{#1}}#2}}
\newenvironment{rcases}{\left.\begin{aligned}}{\end{aligned}\right\rbrace}
\newtheorem{theorem}{Theorem}
\newtheorem{lemma}[theorem]{Lemma}
\theoremstyle{definition} \newtheorem{definition}{Definition}
\theoremstyle{definition} \newtheorem{example}{Example}[section]
\newcommand{\circled}[2][]{%
  \tikz[baseline=(char.base)]{%
    \node[shape = circle, draw, inner sep = 1pt,scale=0.75]
    (char) {\phantom{\ifblank{#1}{#2}{#1}}};%
    \node at (char.center) {\makebox[0pt][c]{\scriptsize #2}};}}
\newcommand{\rom}[1]{%
  \textup{\uppercase\expandafter{\romannumeral#1}}%
}
\newcommand{\tikzmark}[1]{\tikz[overlay,remember picture] \node (#1) {};}
\newcommand*{\AddNote}[4]{%
    \begin{tikzpicture}[overlay, remember picture]
        \draw [decoration={brace,amplitude=.75em},decorate,ultra thick,blue]
            ($(#3)!(#1.north)!($(#3)-(0,1)$)$) --  
            ($(#3)!(#2.south)!($(#3)-(0,1)$)$)
                node [align=center, text width=3cm, pos=0.5, anchor=west, xshift=0.75em] {#4};
    \end{tikzpicture}
}%
\newcommand{\algorithmicbreak}{\textbf{break}}
\DeclareFixedFont{\ttb}{T1}{txtt}{bx}{n}{8} 
\DeclareFixedFont{\ttm}{T1}{txtt}{m}{n}{8}  
\definecolor{deepblue}{rgb}{0,0,0.5}
\definecolor{deepred}{rgb}{0.6,0,0}
\definecolor{deepgreen}{rgb}{0,0.5,0}
\lstdefinestyle{mystyle}{
    basicstyle=\ttm,
    language=Python,
    backgroundcolor=\color{white},
    keywordstyle=\ttb\color{deepgreen},
    emph={MyClass,__init__},
    emphstyle=\ttb\color{deepred},
    stringstyle=\color{deepblue},
    commentstyle=\color{red},
    frame=tb,
    showstringspaces=false,
    literate=
    {sample}{{{\color{deepblue}sample{}}}}5
    {observe}{{{\color{deepblue}observe{}}}}6
    {constraint}{{{\color{deepblue}constraint{}}}}9
    {rs_start}{{{\color{deepred}rs\_start{}}}}7
    {rs_end}{{{\color{deepred}rs\_end{}}}}5
    {rejection_sample}{{{\color{black}rejection\_sample{}}}}{14}
}
\begin{document}

\runningtitle{Amortized Rejection Sampling in Universal Probabilistic Programming}

\runningauthor{{\scriptsize Naderiparizi, \'Scibior, Munk, Ghadiri, Baydin, Gram-Hansen, Schroeder de Witt, Zinkov, Torr, Rainforth, Teh, Wood}}

\twocolumn[
\aistatstitle{Amortized Rejection Sampling in\\Universal Probabilistic Programming}
\aistatsauthor{Saeid Naderiparizi$^{1}$, Adam \'Scibior$^{1}$, Andreas Munk$^1$, Mehrdad Ghadiri$^2$\\{\bf At{\i}l{\i}m G\"{u}ne\c{s} Baydin$^3$, Bradley Gram-Hansen$^{3}$, Christian Schroeder de Witt$^{3}$}\\
{\bf Robert Zinkov$^{3}$, Philip Torr$^{3}$, Tom Rainforth$^{3}$, Yee Whye Teh$^{3}$, Frank Wood$^{1,4,5}$}}
\aistatsaddress{$^1$University of British Columbia, $^2$Georgia Institute of Technology, $^3$University of Oxford\\
$^4$MILA, $^5$CIFAR AI Chair} ]

\begin{abstract}
  Naive approaches to amortized inference in probabilistic programs with unbounded loops can produce estimators with infinite variance. This is particularly true of importance sampling inference in programs that explicitly include rejection sampling as part of the user-programmed generative procedure.  In this paper we develop a new and efficient amortized importance sampling estimator.  We prove finite variance of our estimator and empirically demonstrate our method's correctness and efficiency compared to existing alternatives on generative programs containing rejection sampling loops and discuss how to implement our method in a generic probabilistic programming framework.
\end{abstract}

\section{INTRODUCTION} \label{sec:intro}

It is now understood how to apply probabilistic programming inference techniques to generative models written in ``universal'' probabilistic programming languages (PPLs) \citep{van2018introduction}.  While the expressivity of such languages allows users to write generative procedures naturally, this flexibility introduces complexities, some of surprising and subtle character.  For instance there is nothing to stop users from using rejection sampling loops to specify all or part of their generative model, something that is quite natural to do and we have seen in practice.  While existing inference approaches may asymptotically produce correct inference results for such programs, the reality, which we discuss at length in this paper, is murkier.

The specific problem we address, that of efficient amortized importance-sampling-based inference in models with user-defined rejection sampling loops is more prevalent than it might seem on first consideration.  Our experience suggests that rejection sampling within generative model specification is actually the rule rather than the exception when programmers use universal languages for model specification.  To generate a single draw from anything more complex than standard distribution effectively requires either adding a new probabilistic primitive to the language (beyond most users), hard conditioning on constraint satisfaction (inefficient under most forms of universal PPL inference), or a user-programmed rejection loop.  A major example of this is sampling from a constrained distribution, like a truncated normal or a distribution constrained on a circle. If the model specification language does not have the primitive for the constrained distribution we want, the most natural way to generate such a variate is via user-programmed rejection.  More sophisticated examples abound in simulators used in the physical sciences \citep{baydin2018efficient, baydin2019etalumis}, chemistry \citep{cai2007exact,ramaswamy2010partial,slepoy2008constant}, and other domains \citep{stuhlmuller2014reasoning}. Implicit rejection sampling loops also exist in models containing simulators that guard against certain configurations and, in such cases, must restart after re-sampling the configurations \cite{warrington2019coping}.  Note that the issue we address here is not related to hard rejection via conditioning, i.e.,~\citet{ritchie2015controlling} and related work.  Ours is specifically about rejection sampling loops within the generative model program, whereas the latter is about developing inference engines that are reasonably efficient even when the user specified program has a constraint-like observation that produces an extremely peaked posterior.

The first inference algorithms for languages that allowed generative models containing rejection sampling loops to be written revolved around Markov chain Monte Carlo (MCMC) \citep{goodman2008a,wingate2011a} and sequential importance sampling (SIS) \citep{wood2014new} using the prior as the proposal distribution and then mean-field variational inference \citep{wingate2013automated}. Those methods were very inefficient, prompting extensions of PPLs providing programmable inference capabilities \citep{mansinghka2014venture,scibior2019formally}. Efforts to speed up inference since then have revolved around amortized inference \citep{gershman2014amortized}, where a slow initial off-line computation is traded against fast and accurate test-time inference.  Such methods work by training neural networks that  quickly map a dataset either to a variational posterior \citep{ritchie2016deep} or to a sequence of proposal distributions for SIS \citep{le2016inference}. This paper examines and builds on the latter ``Inference Compilation'' (IC) approach.

Unbounded loops potentially require integrating over infinitely many latent variables. With IC each of these variables has its own importance weight and the product of all the weights can have infinite variance, resulting in an importance sampler with unstable convergence. Furthermore, the associated self-normalizing importance sampler with any finite number of samples can converge to an arbitrary point, giving wrong inference results without warning. It is therefore necessary to take extra steps to ensure convergence of the importance sampler resulting from IC when unbounded loops are present. 

In this paper we present a solution to this problem for the common case of rejection sampling loops.  We establish, both theoretically and empirically, that computing importance weights naively in this situation can lead to arbitrarily bad posterior estimates. We develop a novel estimator to remedy this problem. A preview of the problem and the proposed solution is shown in \cref{fig:illustration}.
\begin{figure}[t]
    \begin{tabular}{|c|c|}
    \hline
    \begin{subfigure}{0.45\linewidth}
    \small
    \vspace{0.3cm}
    \input{algorithms/original.tex}
    \caption{Original program} \vspace{0.3cm} \label{alg:sample-program}
    \end{subfigure} &
    \begin{subfigure}{0.45\linewidth}
    \small
    \input{algorithms/inference.tex}
    \caption{Inference compilation} \label{alg:inference-program}
    \end{subfigure} \\ \hline
    \begin{subfigure}{0.45\linewidth}
    \small \vspace{0.3cm}
    \input{algorithms/collapsed.tex}
    \caption{Equivalent to above} \vspace{0.3cm} \label{alg:collapsed-program}
    \end{subfigure} &
    \begin{subfigure}{0.45\linewidth}
    \small
    \input{algorithms/collapsed-inference.tex}
    \caption{Our IS estimator} \label{alg:collapsed-inference}
    \end{subfigure} \\ \hline
    \end{tabular}
    \caption{(\subref{alg:sample-program}) Illustrates the problem we are addressing. Existing, naive approaches to inference compilation use trained proposals for the importance sampler with proposal $q$ shown in (\subref{alg:inference-program}), where $w$ can have infinite variance, even when each $w^k$ individually has finite variance, as $k$ is unbounded. There exists a simplified program (\subref{alg:collapsed-program}) equivalent to (\subref{alg:sample-program}) and ideally we would like to perform inference using the importance sampler in (\subref{alg:collapsed-inference}). While this is not directly possible, since we do not have access to the conditional densities required, our method approximates this algorithm, without introducing infinite variance.
    } \label{fig:illustration}
\end{figure}

\section{PROBLEM FORMULATION} \label{sec:problem-formulation}
Our formulation of the problem will be presented concretely starting from the example probabilistic program shown in Figure \ref{alg:sample-program}. Even though both the problem and our solution are more general, applying to all probabilistic programs with rejection sampling loops regardless of how complicated they are, this simple example captures all the important aspects of the problem.
As a reminder, inference compilation refers to offline training of importance sampling proposal distributions for all random variables in a program \citep{le2016inference}.   Existing, naive approaches to inference compilation for the program in Figure~\ref{alg:sample-program} correspond to the importance sampler shown in Figure~\ref{alg:inference-program} where there is some proposal learned for every random choice in the program.  While the weighted samples produced by this method result in unbiased estimates, the variance of the weights can be very high and potentially infinite due to the unbounded number of $w^k$s.  To show this, we start by more precisely defining the meaning of the sampler in \cref{alg:inference-program}.

\newcommand\wn{w_{\mathrm{IC}}}

\begin{definition}[Naive weighing] \label{def:program}
  Let $p(x,y,z)$ be a probability density such that all conditional densities exist.
  For each $y$, let $q(x,z|y)$ be a probability density such that $p(x,z|y)$ is absolutely continuous with respect to it. Let $c(x,z)$ be a Boolean condition, and $A$ be the event that $c$ is satisfied such that $p(A|x,y) \geq \epsilon$ and $q(A|x, y) \geq \epsilon$ for all $(x,y)$ and some $\epsilon > 0$.
  Let $x \sim q(x|y)$ and let $z^k \overset{i.i.d.}{\sim} q(z|x,y)$ and $w^k = \frac{p(z^k|x)}{q(z^k|x,y)}$ for all $k \in \mathbb{N}^+$. Let $L = \min\{k | c(x,z^k)\}$, $z = z^L$ and ${\wn = \frac{p(x)}{q(x|y)} p(y|x,z) \prod_{k=1}^L w^k}$.
\end{definition}

For simplicity, in \cref{def:program} and the rest of this paper we let $A$ be the event that $c$ is satisfied. We assert that \cref{def:program} corresponds to the program in \cref{alg:inference-program}. A rigorous correspondence could be established using formal semantics methods, such as the construction of \citet{scibior2017denotational}, but this is beyond the scope of this paper. Although, as we prove later in \cref{thm:equivalence}, the resulting importance sampler correctly targets the posterior distribution, the variance of $\wn$ is a problem, and it is this specific problem that we tackle in this paper.

Intuitively, a large number of rejections in the loop leads to a large number of $w^k$ being included in $\wn$ and the variance of their product tends to grow quickly. In the worst case, this variance may be infinite, even when each $w^k$ has finite variance individually. This happens when the proposed samples are rejected too often, which is formalized in the following theorem.

\begin{restatable}{theorem}{infvariance}
    \label{thm:infinity-variance}
    Under assumptions of \cref{def:program}, if the following condition holds with positive probability under $x \sim q(x|y)$
    \begin{equation}\label{eq:infinity-variance-condition}
        \meanp{z \sim q(z | x,y)}{\frac{p(z | x)^2}{q(z | x,y)^2} (1 - p(A|x,z))} \geq 1
    \end{equation}
    then the variance of $\wn$ is infinite.
\end{restatable}
A proof of this theorem is in \cref{app:proofs:variance}.

\cref{thm:infinity-variance} means that importance sampling with proposals other than the prior may hurt more than help in the case of rejection sampling loops and there is no trivial way to ensure \cref{eq:infinity-variance-condition} does not hold or to detect if it holds for a particular proposal. 
Furthermore, under the conditions of \cref{thm:infinity-variance}, existing IC schemes are effectively useless in practice, even though they are still unbiased in principle.
Since the central limit theorem governs convergence of IS estimators \citep{geweke1989bayesian}, the convergence rates fail when the variance of weights is infinite. Consequently, it leads to a slow to converge and unstable estimator that may exhibit strong biases with any finite number of samples \citep{robert1999monte,koopman2009testing}.
Worse still, even when the variance is finite but large, it may render the effective sample size too low for practical applications, a phenomenon we have observed repeatedly in practice.
What remains is to derive an alternative way to compute $\wn$ that guarantees avoiding such problems arising from rejection sampling loops and in practice leads to larger effective sample sizes than existing methods.

\section{APPROACH} \label{sec:approach}

A starting point to the presentation of our algorithm is to observe that the program in \cref{alg:sample-program} is equivalent to \cref{alg:collapsed-program}, where $z$ satisfying the condition $c$ is sampled directly. \cref{alg:collapsed-inference} presents an importance sampler targeting \ref{alg:collapsed-program}, obtained by sampling $z$ directly from $q$ under the condition $c$. Note that the sampling processes in \ref{alg:inference-program} and \ref{alg:collapsed-inference} are the same, only the weights are computed differently. We now provide a definition for the weights in \ref{alg:collapsed-inference}.

\newcommand\wc{w_{\mathrm{C}}}

\begin{definition}[Collapsed weighing] \label{def:idealized}
Extending \cref{def:program}, let
\begin{equation}
    \label{eq:weight}
    \wc = \frac{p(x)}{q(x|y)} \underbrace{\frac{p(z|x, A)}{q(z|x, A, y)}}_{\circled{T}} p(y|x, z).
\end{equation}
\end{definition}

Note that $\meanp{}{\wn} = \meanp{}{\wc}$, as we prove in \cref{thm:equivalence}. However, since $\wc$ only involves a fixed number (three) of terms, we can expect it to avoid the aforementioned problems with exploding variance. Unfortunately, we can not directly compute $\wc$.

In \cref{eq:weight} we can directly evaluate all terms except $\circled{T}$, since, $p(z|x, A)$ and $q(z|x, A, y)$ are  defined implicitly by the rejection sampling loop. Applying Bayes' rule to this term gives the following equality:
    \begin{equation} \label{eq:bayes}
        \circled{T} =
        \underbrace{\frac{q(A|x, y)}{p(A|x)}}_{\circled{1}}
        \underbrace{\frac{p(z|x)}{q(z|x, y)}}_{\circled{2}}
        \underbrace{\frac{\cancel{p(A|z, x)}}{\cancel{q(A|z, x, y)}}}_{\circled{3}}.
    \end{equation}

The term $\circled{2}$ can be directly evaluated, since we have access to both conditional densities and the term $\circled{3}$ is always equal to $1$, since $A$ only depends on $x$ and $z$, and is determined by $c(z, x)$ which is common between $p$ and $q$. However, the term $\circled{1}$, which is the ratio of acceptance probabilities under $q$ and $p$, can not be computed directly and we need to estimate it. We provide separate unbiased estimators for $q(A|x, y)$ and $\frac{1}{p(A|x)}$.

For $q(A|x, y)$ we use straightforward Monte Carlo estimation of the following expectation:
\begin{align}
    q(A|x, y) &= \int q(A|z,x, y) q(z|x, y) dz\nn\\
    &= \int c(z,x) q(z|x, y) dz\nn\\
    &= \meanp{z \sim q(z|x, y)}{c(z, x)}\label{eq:zq_defintion}
\end{align}

For $\frac{1}{p(A|x)}$ we use Monte Carlo estimation after applying the following standard lemma:
\begin{lemma}
\label{lemma:geometric-dist}
Let $A$ be an event that occurs in a trial with probability $p$. The expectation of the number of trials to the first occurrence of $A$ is equal to $\frac{1}{p}$.
\end{lemma}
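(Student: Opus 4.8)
The plan is to identify the number of trials until the first occurrence of $A$ as a geometrically distributed random variable and then evaluate its expectation directly. Writing $N$ for this count, the assumed independence of successive trials with $P(A) = p$ gives $P(N = k) = (1-p)^{k-1} p$ for each $k \in \mathbb{N}^+$, since the first $k-1$ trials must fail and the $k$-th must succeed. Everything reduces to computing $\mean{N} = \sum_{k=1}^{\infty} k\,(1-p)^{k-1} p$.

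I would favor the tail-sum route for brevity. Using the identity $\mean{N} = \sum_{k=1}^{\infty} P(N \geq k)$, valid for any nonnegative integer-valued random variable, I observe that $P(N \geq k) = (1-p)^{k-1}$, because $N \geq k$ precisely when the first $k-1$ trials all fail. The expectation is then a single geometric series, $\sum_{k=1}^{\infty} (1-p)^{k-1} = 1/\bigl(1 - (1-p)\bigr) = 1/p$, which is the claim. As an equivalent alternative, a one-step conditioning argument gives the recursion $\mean{N} = p \cdot 1 + (1-p)\bigl(1 + \mean{N}\bigr)$: either $A$ occurs immediately, or one trial is wasted and the expected remaining count is again $\mean{N}$ by independence. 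Simplifying yields $p\,\mean{N} = 1$, hence $\mean{N} = 1/p$.

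The only subtlety, and it is minor, is that both computations require $p > 0$ so that the series converges and the recursion admits a finite solution; this is guaranteed in our setting by the standing assumption $p(A \mid x) \geq \epsilon > 0$ from Definition~\ref{def:program}. Since the statement is a classical fact about the geometric distribution, I expect no substantive obstacle beyond correctly setting up the distribution of $N$.
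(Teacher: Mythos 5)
Your proof is correct. The paper itself offers no proof of this lemma---it is introduced as ``the following standard lemma'' and used without justification---so there is no argument to compare against; your derivation, identifying the trial count as a geometric random variable and computing its mean either via the tail-sum formula $\sum_{k\geq 1} P(N \geq k)$ or via the one-step recursion $\mathbb{E}[N] = p + (1-p)(1+\mathbb{E}[N])$, is the standard and complete way to establish it. Your remark that $p > 0$ is needed for convergence, and that this is supplied by the assumption $p(A \mid x) \geq \epsilon > 0$ in Definition~\ref{def:program}, is exactly the right point of contact with the rest of the paper; one could add that the independence of successive trials, which your setup assumes, holds here because the $\zzz''_{j,l}$ are drawn i.i.d.\ from $p(\zzz|x)$ conditionally on $x$.
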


It is important that these estimators are constructed independently of $z$ being sampled to ensure that we obtain an unbiased estimator for $\wc$ specified in \cref{eq:weight}. Also, it is important to note these two values $q(A|x,y)$ and $\frac{1}{p(A|x)}$ are state-dependent, they are not only specific to a rejection sampling loop, but also depend on the state of the program when entering the loop.  We put together all these elements to obtain our final method in \cref{alg:our-algorithm}. More formally, the weight obtained by our method is defined as follows.

\newcommand\wo{w_{\mathrm{ARS}}}

\begin{definition}[Our weighting] \label{def:ours}
Extending \cref{def:program}, let
$z_i' \overset{i.i.d.}{\sim} q(z|x,y)$ for $i \in 1,\dots,N$ and $K$ be the number of $z_i'$ for which
$c(x,z_i')$ holds.
Let $\zzz''_{j} = (z_{j,1}'', \dots, z_{j,T_j}'')$ be sequences of potentially
varying length for $j \in 1, \dots, M$ with $z_{j,l}'' \overset{i.i.d.}{\sim} p(z|x)$ such that for all
$j$, $T_j$ is the smallest index $l$ for which $c(x,z_{j,l}'')$ holds.  Let $ T =
\frac{1}{M}\sum_{j=1}^M T_j$. Finally, let
\begin{equation}
    \wo = \frac{p(x)}{q(x|y)}\frac{KT}{N}\frac{p(z|x)}{q(z|x,y)}p(y|x,z) .
\end{equation}
\end{definition}

Throughout this section we have only informally argued that the three importance samplers presented target the same distribution. With all the definitions in place we can make this argument precise in the following theorem.

\begin{theorem} \label{thm:equivalence}
For any $N \geq 1$ and $M \geq 1$, and all values of $(x,y,z)$,
\begin{equation}
 \meanp{}{\wn | x,y,z} = \wc = \meanp{}{\wo | x,y,z} .
\end{equation}
\end{theorem}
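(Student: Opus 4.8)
The plan is to prove the two equalities $\meanp{}{\wn | x,y,\zzz} = \wc$ and $\wc = \meanp{}{\wo | x,y,\zzz}$ separately, exploiting the fact that all three weights share the common deterministic factor $\frac{p(x)}{q(x|y)}\frac{p(\zzz|x)}{q(\zzz|x,y)}p(y|x,\zzz)$ once $(x,y,\zzz)$ is fixed. In $\wc$ this same factor multiplies exactly the term $\circled{1} = \frac{q(A|x,y)}{p(A|x)}$ of the Bayes expansion \eqref{bayes}, since $\circled{2}$ is precisely $\frac{p(\zzz|x)}{q(\zzz|x,y)}$ and $\circled{3}=1$ because $A$ is deterministic given $(x,\zzz)$. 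Hence it suffices to show that the conditional expectation of the remaining random factor equals $\circled{1}$ in each case, after which the theorem follows by substitution into \eqref{bayes}.

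For the right equality I would start with $\wo$, where conditioning on $(x,y,\zzz)$ leaves only $K$ and $T$ random. The draws $\zzz_i'$ are i.i.d.\ from $q(\zzz|x,y)$, so $K$ is $\mr{Binomial}(N, q(A|x,y))$ by \eqref{zq_defintion} and $\meanp{}{K/N} = q(A|x,y)$. Each $T_j$ counts trials to the first acceptance under $p(\zzz|x)$, so Lemma \ref{lemma:geometric-dist} gives $\meanp{}{T_j} = 1/p(A|x)$ and hence $\meanp{}{T} = 1/p(A|x)$. Because the two sampling procedures are independent, $\meanp{}{KT/N} = q(A|x,y)/p(A|x) = \circled{1}$, which is the desired factor.

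The left equality is the main obstacle. Here I would write $\prod_{k=1}^L w^k = w^L \prod_{k=1}^{L-1} w^k$, where $w^L = \frac{p(\zzz|x)}{q(\zzz|x,y)}$ is fixed by the conditioning and the remaining product runs over the rejected draws of the i.i.d.\ loop. Forming the unnormalized joint density of $(L, \zzz^1,\dots,\zzz^{L-1}, \zzz^L=\zzz)$ given $(x,y)$, weighting it by $\prod_{k=1}^{L-1} w^k$, integrating out each rejected draw, and summing over $L$, the $\zzz$-dependent factor $q(\zzz|x,y)c(x,\zzz)$ cancels against the same factor in the marginal density of the accepted sample (which is $q(\zzz|x,y,A)$). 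Each rejected draw contributes $\int p(\zzz|x)(1-c(x,\zzz))\,d\zzz = 1 - p(A|x)$, because the proposal density cancels the denominator of $w^k$, so the geometric sum over $L$ yields $\sum_{\ell \geq 1}(1-p(A|x))^{\ell-1} = 1/p(A|x)$ while the marginal contributes $1/q(A|x,y)$. The ratio is again $q(A|x,y)/p(A|x) = \circled{1}$.

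The two points requiring care, both in the left equality, are: (i) that conditioning on the accepted value $\zzz$ leaves the rejected draws distributed as before and independent of $\zzz$ — which is exactly what the cancellation of $q(\zzz|x,y)c(x,\zzz)$ encodes, reflecting the independence in the i.i.d.\ loop between the number of failures before the first success and the value at the success; and (ii) convergence of the geometric series, guaranteed by the assumption $p(A|x)\geq\epsilon>0$ from Definition \ref{def:program}. With $\circled{1}$ identified in both directions, substituting into \eqref{bayes} completes the proof.
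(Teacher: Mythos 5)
Your proof is correct and takes essentially the same route as the paper's: the right equality via $\meanp{}{K/N}=q(A|x,y)$, $\meanp{}{T}=1/p(A|x)$, independence of the two estimators, and the Bayes expansion \eqref{bayes}; and the left equality by factoring out $w^L$ and showing $\meanp{}{\prod_{k=1}^{L-1}w^k\mid x,y,\zzz}=q(A|x,y)/p(A|x)$ through the geometric series, which is exactly the computation the paper delegates to \eqref{weight-mean} in Appendix~\ref{app:variance}. Your explicit justification that the rejected draws remain independent of the accepted value $\zzz$ under the conditioning is a point the paper's proof uses implicitly, so no gap.
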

\begin{proof}
For the second equality, use \cref{eq:zq_defintion}, then \cref{lemma:geometric-dist}, \cref{eq:bayes}, and finally \cref{eq:weight}.
\begin{align}
&\meanp{}{\wo | x,y,z} \\
= &\frac{p(x)}{q(x|y)}\frac{p(z|x)}{q(z|x,y)}p(y|x,z) \frac{1}{N} \meanp{z'}{K} \meanp{z''}{T} \\
= &\frac{p(x)}{q(x|y)}\frac{p(z|x)}{q(z|x,y)}p(y|x,z)q(A|x, y)\frac{1}{p(A|x)} \\
= &\frac{p(x)}{q(x|y)} \frac{p(z|x, A)}{q(z|x, A, y)} p(y|x, z)
= \wc
\end{align}
For the first equality, use \cref{eq:app:proofs:variance:weight-mean} in \cref{app:proofs:variance} to get
\begin{align}
&\meanp{}{\wn | x,y,z} \\
= &\frac{p(x)}{q(x|y)} p(y|x,z) \, w^L \, \meanp{z^{1:L-1}}{\prod_{k=1}^{L-1} w^k} \\
= &\frac{p(x)}{q(x|y)} p(y|x,z) \, \frac{p(z|x)}{q(z|x, y)} \, \frac{q(A|x,y)}{p(A|x)} = \wc
\end{align}
\end{proof}

\begin{algorithm}[t]
    \begin{algorithmic}[1]
        \caption{Pseudocode for our algorithm applied to the probabilistic program from \cref{alg:sample-program}.}
        \label{alg:our-algorithm}
        \State $x \sim q(x | y)$
        \State $w \leftarrow \frac{p(x)}{q(x | y)}$
        \For{$k \in \mathbb{N^+}$}
        \State $z^k \sim q(z|x,y)$ 
        \If { $c(x, z^k)$ }
            \State $z = z^k$
            \State \algorithmicbreak 
        \EndIf
        \EndFor
        \State $w \leftarrow w\, \frac{p(z|x)}{q(z|x,y)}$ \;
        \vspace{.5em}
        \State $K \leftarrow 0$ \tikzmark{zq-top}
        \For{$i \in 1, \ldots N$}
            \State $z'_i \leftarrow q(z|x, y)$
            \State $K \leftarrow K + c(z, x)$ \tikzmark{zq-right}
        \EndFor \tikzmark{zq-bottom}
        \vspace{.5em}
        \For{$j \in 1, \ldots M$} \tikzmark{zp-top}
            \For{$l \in \mathbb{N^+}$}
                \State $z''_{j,l} \leftarrow q(z|x, y)$
                \If { $c(x, z''_{j,l})$ } \tikzmark{zp-right}
                    \State $T_j \leftarrow l$
                    \State \algorithmicbreak
                \EndIf
            \EndFor
        \EndFor
        \State $T \leftarrow \frac{1}{M} \sum_{j=1}^M T_j$ \tikzmark{zp-bottom}
        \vspace{.5em}
        \State $w \leftarrow w\,\frac{KT}{N}$
        \State $w \leftarrow w\, p(y|z,x)$
    \end{algorithmic}
    \AddNote{zq-top}{zq-bottom}{zq-right}{Estimate $q(A|x, y)$ using \cref{eq:zq_defintion}}
    \AddNote{zp-top}{zp-bottom}{zp-right}{Estimate $\frac{1}{p(A|x)}$ using \cref{lemma:geometric-dist}}
    \end{algorithm}

Since all three importance samplers use the same proposal distributions for $(x,z)$, \cref{thm:equivalence} shows that they all target the same distribution, which is the posterior distribution specified by the original probabilistic program in \cref{alg:sample-program}.

Finally, we can prove that our method handles inference in rejection sampling loops without introducing infinite variance. Note that variance may still be infinite for reasons not having to do with the rejection sampling loop, if $q(x|y)$ and $q(z|x,y)$ are poorly chosen.


\begin{theorem}
If $\wc$ from \cref{def:idealized} has finite variance, then $\wo$ from \cref{def:ours} has finite variance, for any $N \geq 1$ and $M \geq 1$.
\end{theorem}
\begin{proof}
Note that conditionally on $(x,y,z)$, $K$ follows a binomial distribution. Therefore, $\mathrm{Var}[\frac{K}{N} | x,y,z] < 1 < \infty$. Then, note that conditionally on $(x,y,z)$, $T_j$s are independent of each other and follow a geometric distribution. Therefore,
\begin{align*}
    &\mathrm{Var}[T_j | x,y,z] = \frac{1 - p(A | x)}{p(A|x)^2} < \frac{1}{p(A|x)^2}\\
    \Rightarrow &\mathrm{Var}[T | x,y,z] < \frac{1}{p(A|x)^2} < \frac{1}{\epsilon^2} < \infty.
\end{align*}
Also, conditionally on $(x,y,z)$, $\frac{K}{N}$
and $T$ are independent, so $\mathrm{Var}[\frac{KT}{N} | x, y, z] < B$ for some constant $B < \infty$.
Then see that ${\wo = \wc \frac{p(A|x)}{q(A|x,y)} \frac{KT}{N}}$. Finally, using the law of total variance, we get
\begin{align}
&\mathrm{Var}[\wo] = \meanp{}{\mathrm{Var}[\wo|x,y,z]}\nn\\
&\hspace{5.5em}+ \mathrm{Var}[\meanp{}{\wo|x,y,z}] \\
&= \meanp{}{\left(\wc \frac{p(A|x)}{q(A|x,y)}\right)^2 \mathrm{Var}\left[\frac{KT}{N}\bigg|x,y,z\right]}\nn\\
&\quad + \mathrm{Var}[\wc] \\
&\leq \meanp{}{\wc^2 \frac{1}{\epsilon^2} B} + \mathrm{Var}[\wc] < \infty
\end{align}
\end{proof}
\paragraph{Training proposals} Our method does not concern training the IC network. It is mainly about computing sample weights once the network is trained. However, an important assumption made in our method is the same proposal $q(z|x, y)$ is used in all iterations of a rejection sampling loop. Therefore, it should be reflected in the training process in IC as well. We follow the approach proposed in \cite{baydin2018efficient} which discards the rejected samples at training time and only uses the sample values that conclude the loop for training. As a result, the training samples for $z$ are from $p(z|x,A)$, which means, the distribution of training data obtained from the programs in \cref{alg:sample-program,alg:collapsed-program} are identical. We provide more details and a discussion on training proposals in \cref{app:training}.

\section{EXPERIMENTS} \label{sec:experiments}
We illustrate our method by performing inference in three example probabilistic programs that include rejection sampling loops. In each program the latent variables are identified via \lstinline{sample} statements. The inferred posterior distribution is conditioned on observed values, identified via \lstinline{observe} statements. Two of our experiments are designed so that ground truth inference results can be derived analytically.

\begin{figure*}[tbp]
  \centering
  \includegraphics[trim=0 114 0 0,clip]{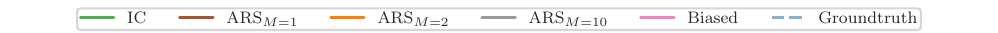}
  \includegraphics[trim=0 7 0 15,clip]{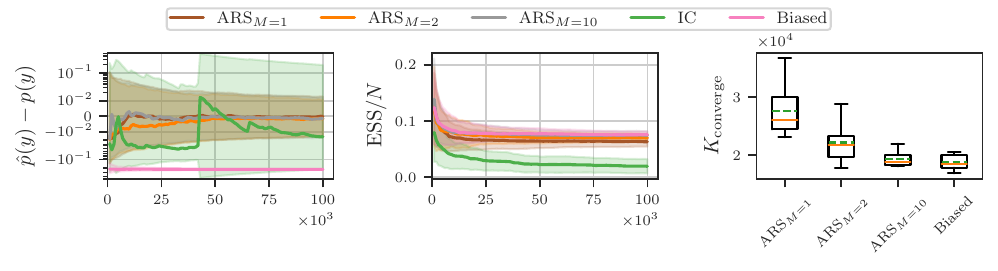}
  \includegraphics[trim=0 5 0 15,clip]{figs/exp_mini_sherpa_a_812.pdf}
  \caption{(Top) results of Marsaglia and (bottom) Mini-SHERPA experiments.
  In both experiments we estimate marginal likelihood of an observation.
  Left plots show how different methods converge to the ground truth marginal likelihood. Middle plots show the (normalized) ESS for each method. Right plots are box plots of the required number of samples to ensure convergence. In these box plots green dashed and orange solid lines show the mean and median, respectively.
  Our method with any $M$ converges to ground truth with lower variance compared to IC. As expected, larger $M$ leads to faster convergence, lower variance, and higher ESS.
  See \cref{fig:app-more-results:results-table} in \cref{app:more-results} for a summary of final values reached in these plots.
  }
  \label{fig:results-marsaglia-mini_sherpa}
\end{figure*}

\newcommand{\kconverge}{K_\text{converge}}

We evaluate the efficacy of our approach in several ways including (I) computing the effective sample size (ESS) of IS estimators, (II) empirically comparing the convergence rates of different methods to ground truth values, and (III) reporting the number of samples required to ensure convergence of the IS estimators. We adopt a convergence test proposed by \citet{chatterjee2018sample} and report the sample size required for IS estimators, $\kconverge$. Formally, define
\begin{equation}
    Q_K := \frac{\max_{1 \leq k \leq K} w^{k}}{\sum_{k=1}^K w^{k}},
\end{equation}
where $w^k$ is defined as in \cref{def:program}. According to this convergence test, an IS estimate with $K$ samples is converged if $\meanp{}{Q_K} < \epsilon$, for a predefined $\epsilon$. Finally, $\kconverge$ is the smallest value for which the IS estimator is converged.

The specific methods we compare are:
\begin{itemize}
  \item{Naive (IC)}: Like the approach in \cref{alg:inference-program} this uses a proposal for all iterations of rejection sampling loops. Final importance weights are computed by multiplying all the weights for all samples in the trace, accepted and rejected.
  \item{Amortized Rejection Sampling (ARS$_{M=m}$)}:
  This is our method (\cref{alg:our-algorithm}) with ${M=m}$ and ${N=\max(m, 10)}$ as defined in \cref{def:ours}, not multiplying in the weights of any rejected samples on the trace.
  \item{Ablated (Biased)}: Implements the incorrect variant of ARS that does not estimate nor multiple in the correction factor $\frac{q(A|x,y)}{p(A|x)}$. This is only shown to see the effect of correction factors in terms of convergence speed and estimation bias.
\end{itemize}

Note that the variance and convergence of our method depends primarily on the probabilities of rejection in the model and proposal, and not on other features of the rejection sampler. Therefore, our experimental results do not change significantly for more complex programs.

\subsection{Marsaglia}
Marsaglia polar method \citep{marsaglia1964convenient} is a pseudo-random number sampling method for generating samples from a Normal distribution. It samples a point $(a, b)$ uniformly from a unit circle and applies change of variables $x_1 = a \sqrt{\frac{-2 \log(a^2+b^2)}{a^2+b^2}}$ and $x_2 = b \sqrt{\frac{-2 \log(a^2+b^2)}{a^2+b^2}}$. Then, $x_1$ and $x_2$ are two independent samples distributed as $\mN(0, 1)$. The most straight-forward way of sampling from a circle is by sampling from a square and rejecting the sample if it lies outside the circle. This is a common use case of rejection sampling, to sample from a constrained probability space.

We implement a Gaussian with unknown mean model with two observations $y_1$ and $y_2$. The generative process is defined as ${\mu\sim \mN(\mu_0,\sigma_{0}^2)}$, ${y_i|\mu \sim \mN(\mu,\sigma^2)}$
where sampling from $\mN(\mu_0, \sigma_{0}^2)$ is implemented by the Marsaglia polar method. Program~\ref{code:marsaglia} in \cref{app:experiment-details-marsaglia} shows the implementation of this model.

We train an LSTM-based inference compilation network on this model to learn proposals and use them in a SIS engine to estimate the marginal likelihood $p(y_1, y_2)$ for different observations.
Top row of \cref{fig:results-marsaglia-mini_sherpa} (left) shows the estimation error between the marginal likelihood $p(y_1, y_2)$ and its SIS estimation $\hat{p}(y_1, y_2) = \sum_{k=1}^{K} w^k$. The figure shows an aggregation of 100 runs of the experiment. Estimates by our method always converge to the true marginal likelihood with any $M>0$. Further, larger $M$ leads to faster convergence and higher ESS. On the other hand, the figure shows that IC fails to converge after 100,000 draws. It is worth mentioning that not for all observations IC fails to converge and/or has low ESS. The main problem is whether this happens is not identifiable in practice.

\subsection{Mini-SHERPA}
Mini-SHERPA is an event generator of a simplified model of high-energy reactions of particles. Its naming comes from SHERPA \citep{gleisberg2009event}, the state-of-the-art simulator of high-energy reactions of particles. SHERPA has up to thousands of latent variables and widely uses rejection sampling loops which is a major source of difficulty in inference in this model \citep{baydin2018efficient,baydin2019etalumis}. Mini-SHERPA, however, has up to 11 latent variables (excluding variables rejected in rejection sampling loops) and up to two rejection sampling loops. In its simulation process, depicted in \cref{fig:sherpa-process-picture}, it samples a 3-dimensional momentum for a starting particle, samples a type of decay (known as ``decay channel''), then samples momentum of each of the resulting particles. At the end of the simulation, it produces a noisy measurement of the energy deposited by the resulting particles on the 2-dimensional surface of a detector as the observation. An example observation is shown on the right side of \cref{fig:sherpa-process-picture}. We provide more details including a pseudocode of the simulator in \cref{app:experiment-details-marsaglia}.

We train an LSTM-based inference compilation network on this model to learn proposals. We then use the learned proposals in a SIS engine to estimate the marginal likelihood of a given observation $p(y)$. The ground truth value of marginal likelihood is estimated by importance sampling with prior as proposal for the variables inside rejection sampling loops. Since sampling from prior is suboptimal, we draw more than 12 million samples to estimate the ground truth value. The results of this experiment are shown in bottom row of \cref{fig:results-marsaglia-mini_sherpa}. Similar to the previous experiment, ARS with any $M > 0$ converges to the ground truth value while larger values of $M$ generally lead to higher ESS and faster convergence. IC performs poorly in this experiment and fails to converge after 100,000 samples. We provide additional results with other observations in the appendix.

\begin{figure}[tbp]
    \centering
    \includegraphics[width=0.5\columnwidth,valign=c]{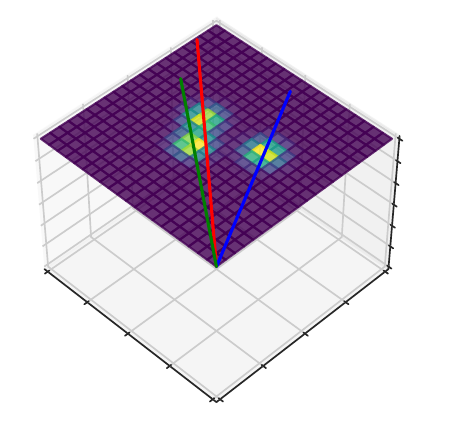}
    \includegraphics[width=0.4\columnwidth,trim=190 0 0 0, clip,valign=c]{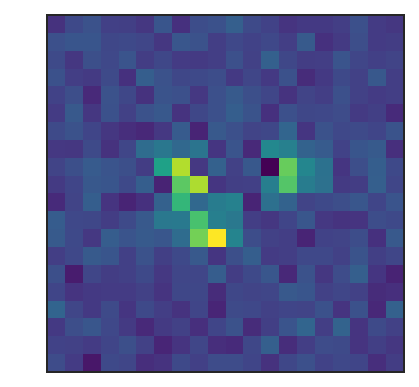}
    \caption{An example of the simulation process in the Mini-SHERPA experiment. (Left) shows its simulation process schematically. It shows decay of a starting particle into three new particles. Each of the red, blue and green lines show the trajectory of a resulting particle. The 2D grid on top shows the measured energy deposited by the particles. (Right) shows the observation in this model which is a noisy version of the energy grid.}
    \label{fig:sherpa-process-picture}
\end{figure}

In all experiments, as expected, the Biased method does not converge to the correct value, but it usually has high ESS. This is potentially misleading as the samples do not have the extra weight variance introduced by Monte Carlo estimate of the correction factors for rejection sampling loops.

\subsection{Beta-Bernoulli}
In our last experiment, we focus on comparing ARS with a baseline proposed by \citet{baydin2018efficient}, which we term ``Prior''. This baseline uses the prior as proposal for the variables within rejection sampling loops, ignoring the learned proposal. This approach sidesteps the need for correction factors because they are simply equal to 1.

Intuitively, because ``Prior'' does not involve the additional Monte-Carlo estimation that ARS introduces, it should have lower variance. On the other hand, if the prior is far from the true posterior, the estimator will have high variance. Hence, depending on the model and observations, ``Prior'' might be better or worse than ARS. To investigate this in practice, we implement a Beta-Bernoulli experiment,
$$x \sim \text{Beta}(\alpha, \beta)$$
$$y_i \sim \text{Bernoulli}(x) \text{ for } 1 \leq i \leq n.$$
However, we explicitly implement sampling from the Beta prior by repeatedly sampling from a Uniform distribution and accepting based on the ratio of the Beta and Uniform distributions. Program~\ref{code:beta-bern} shows an implementation of the model. Further details, such as the exact form of the acceptance function, is provided in the appendix.

\begin{figure}[tbp]
    \centering
    \includegraphics[width=\columnwidth]{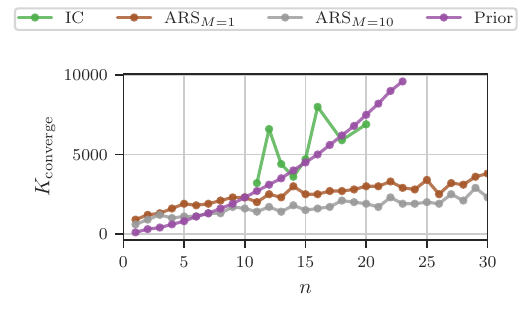}
    \caption{Results of the Beta-Bernoulli experiment. In this plot, $K_\text{converge}$ is the required number of samples to ensure convergence and $n$ is the model's parameter. As $n$ grows, the difference between the prior and the true posterior increases, deteriorating performance of the ``Prior'' approach. Our method, even with a very limited budget to estimate the correction factors, quickly outperforms ``Prior''. The missing points of the lines means fails to converge after 10,000 samples.}
    \label{fig:results_beta}
\end{figure}

\begin{figure}[bt]
\begin{lstlisting}[caption=Beta-Bernoulli, frame=tb, label=code:beta-bern]
while True:
  rs_start()
  x = sample(Uniform(0, 1))
  u = sample(Uniform(0, 1))
  if c(x,u):
      rs_end()
      break
for i in range(n):
  observe(Bernoulli(x), y[i])
\end{lstlisting}
\end{figure}

This model is parametrized by the prior parameters $\alpha, \beta$ and the number of observations $n$. We control the rejection rate by changing $\alpha, \beta$ and the closeness of the prior and posterior by changing $n$. In our experiments $y_i = \text{True}$ for all $i$.

In this experiment, we do not train the proposals. Instead, since the posterior distributions are tractable, we analytically derive the distributions that optimize inference compilation's objective (that is, the posterior distributions in the ``collapsed'' Beta-Bernoulli program) and manually choose the proposals accordingly. See the appendix for more details.

In \cref{fig:results_beta} we report $\kconverge$, the smallest value of $K$ for which the estimator passes the convergence test. This plot demonstrates that depending on the program and the observations we solve inference for, the ``Prior'' approach may perform better or worse than ARS. Importantly where the true posterior is far from the prior (when $n$ is large), the ``Prior'' approach quickly fails to converge in less than 10,000 draws. Note that ARS$_{M=1}$ converges quickly even with a very limited budget to estimate the correction factors. ARS converges faster with larger values of $M$ while IC fails to converge most of the time.

\section{IMPLEMENTATION} \label{sec:imp}
In the previous sections we have presented and experimentally validated our method on a few programs. Our method applies much more broadly, in fact to all probabilistic programs with arbitrary stochastic control flow structures and any number of rejection sampling loops. This includes nesting such loops to arbitrary degree.  The only constraint is that observations can not be placed inside the loops, i.e., \emph{no conditioning inside rejection sampling loops}. We conjecture that our method produces correct weights for all probabilistic programs satisfying this constraint, but proving that is beyond the scope of this paper and would require employing sophisticated machinery for constructing formal semantics, such as developed by \citet{scibior2017denotational}. Nonetheless, we provide a brief proof sketch in \cref{app:proofs:more-complex-programs}.

To enable practitioners to use our method in its full generality we have implemented it in PyProb\footnote{\url{https://github.com/pyprob/pyprob}} \citep{le2016inference}, a universal probabilistic programming library written in Python. The particulars of such a general purpose implementation pose several difficult but interesting problems, including identifying rejection sampling loops in the original program, addressing particular rejection sampling loops, and engineering solutions that allow acceptance probabilities bespoke to each loop to be estimated by repeatedly executing the loops with different proposal distributions. In this section we describe some of these challenges and discuss our initial approach to solving them.
\begin{figure}[tbp]
    \begin{minipage}{.49\linewidth}
    \begin{lstlisting}[caption=Original, frame=tb]
x = sample(P_x)
while True:

    z = sample(P_z(x))
    if c(x, z):

        break
observe(P_y(x,z), y)
return x, z
    \end{lstlisting}
    \end{minipage}\hfill
    \begin{minipage}{.49\linewidth}
    \begin{lstlisting}[caption=Annotated, frame=tb]
x = sample(P_x)
while True:
    rs_start()
    z = sample(P_z(x))
    if c(x, z):
        rs_end()
        break
observe(P_y(x,z), y)
return x, z
    \end{lstlisting}
    \end{minipage}
    \caption{An illustration of annotations required by our system. To apply our method we only require that entry and exit to each rejection sampling loop be annotated with \lstinline{rs_start} and \lstinline{rs_end} respectively. Our implementation then automatically handles the whole inference process, even if the annotated loops are nested.} \label{fig:sample-program}
\end{figure}

The first challenge is identifying rejection sampling loops in the probabilistic program itself.  One mechanism for doing this is to introduce a rejection sampling primitive, macro, or syntactic sugar into the probabilistic programming language itself whose arguments are two functions: the acceptance function and the body of the rejection sampling loop.  While this may be feasible, our approach lies in a different part of the design space, given our choice to implement in PyProb and its applicability to performing inference in existing stochastic simulators.  In this setting there are two other design choices: some kind of static analyzer that automates the labelling of rejection sampling loops by looking for rejection sampling motifs in the program (unclear how to accomplish this in a reliable and general way) or providing the probabilistic programmer functions that need to be carefully inserted into the existing probabilistic program to demarcate where rejection sampling loops start and end.

We chose the latter approach in this paper.  In the programs in Section~\ref{sec:experiments} we silently introduced the functions \lstinline[columns=fixed]{rs_start} and \lstinline[columns=fixed]{rs_end} to tag the beginning and end of rejection sampling loops.  These functions are used to inform an inference engine about the scope of each rejection sampling loop, in particular so that all \lstinline[columns=fixed]{sample} statements in between calls to \lstinline[columns=fixed]{rs_start} and \lstinline[columns=fixed]{rs_end} can be tracked.  \cref{fig:sample-program} illustrates where these primitives have to be inserted in probabilistic programs with rejection loops to invoke our ARS techniques.

The specific implementation details are infeasible to cover thoroughly and requires substantial review of PyProb internals. However, the key functionality enabled by these tags includes two critical things:
\textbf{(\rom{1})} we need to be able to execute additional iterations of every rejection sampling loop in the program to compute our estimator of $\frac{q(A|x,y)}{p(A|x)}$.  For every  \lstinline[columns=fixed]{rs_start} we have to be able to continue the program multiple times, executing the rejection loop both proposing from $p$ and $q$, terminating the continuation once the matching \lstinline{rs_end} is reached. Efficient implementations of this make use of the same forking ideas that made ``probabilistic C'' possible \citep{paige2014compilation}. \textbf{(\rom{2})} We have to be able to identify rejection sampler start and end pairs and
design a special addressing scheme for the samples in rejection sampling loops such that the rejected samples are replaced by the latter accepted ones.

Our PyProb implementation\footnote{\url{https://github.com/plai-group/amortized-rejection-sampling}} addresses all of these issues. We provide more details on our implementation in \cref{app:implementation-details}.

\section{DISCUSSION} \label{sec:discussion}
We have addressed an issue in amortized importance-sampling-based inference for universal probabilistic programming languages.
We have demonstrated that even simple rejection sampling loops can cause major problems for existing probabilistic programming inference algorithms.
Particularly, we showed empirically and theoretically that SIS can perform poorly in presence of rejection sampling loops, even in simple models with only a few rejection sampling loops.
Our proposed method is an unbiased estimator, often with lower variance than naive SIS estimators.

Although our method has a new source of variance, that of additional Monte Carlo estimators which can make its variance higher than naive SIS in some cases, we have proved it is guaranteed to always only add a finite variance to the estimates. The absence of this guarantee is a major shortcoming of naive SIS methods, as there is no easy way of predicting if they will have infinite variance. As a result, incorrect estimates can be made without warning. Therefore, they cannot be used safely when a program that contains rejection sampling loops.

The cost of taking our approach is somewhat subtle but involves needing to estimate the exit probabilities of all rejections sampling loops in the program under both the prior and the proposal. At inference time, once a rejection sampling loop is encountered, the inference engine must ``pause'' and estimate them on the fly. This can be slow and increase the implementation complexity of the probabilistic programming system. However, using forking and multiprocessing capabilities avoids the time overhead of our method as the correction factor estimates are made in parallel with the main inference engine.

However, this fix of amortized importance-sampling-based inference for universal probabilistic programming systems is significant, particularly as it pertains to uptake of this kind of probabilistic programming system.  Currently, users of such systems who use rejection sampling in their generative models may experience the probabilistic programming system as confusingly not working.  This will be due to potentially non-convergent non-diagnosable behavior we elaborated on, which in turn leads to poor sample efficiency.  Our work makes it so that efficient, amortized inference engines work for probabilistic programs that users actually write and in so doing removes a major impediment to the uptake of universal probabilistic programming systems in general.    
\balance

\subsubsection*{Acknowledgements}
We thank Lukas Heinrich for providing the implementation of the Mini-SHERPA simulator and helping with its experiment in this paper. We acknowledge the support of the Natural Sciences and Engineering Research Council of Canada (NSERC), the Canada CIFAR AI Chairs Program, and the Intel Parallel Computing Centers program. Additional support was provided by UBC's Composites Research Network (CRN), and Data Science Institute (DSI). Bradley Gram-Hansen is supported by the EPSRC Autonomous Intelligent Systems and Machines grant. Christian Schroeder de Witt is generously supported by the Cooperative AI Foundation. This research was enabled in part by technical support and computational resources provided by WestGrid (www.westgrid.ca), Compute Canada (www.computecanada.ca), and Advanced Research Computing at the University of British Columbia (arc.ubc.ca).

\bibliography{bib}


\clearpage
\appendix

\thispagestyle{empty}

\onecolumn \makesupplementtitle

\section{PROOFS}\label{app:proofs}
\subsection{Proof of \cref{thm:infinity-variance}}\label{app:proofs:variance}
\infvariance*
\begin{proof}
    In this proof, we carry the assumptions and definitions from \cref{def:program}, with the exception that we use subscript for denoting weights and samples in iterations of the loop i.e., $z^k \rightarrow z_k$ and $w^k \rightarrow w_k$.
    
    We first compute the variance of the rejected sample weights. As a reminder, $L$ is a random variable denoting the number of iterations until acceptance, hence all the iterations until $L-1$ were rejected while the $L^{\text{th}}$ iteration is accepted. Let $\meanp{q}{\prod_{k=1}^{L-1} w_k}$ denote the mean value of the product of all the weights corresponding to the rejected samples when sampling from the proposal $q$, and define $\meanp{q}{\prod_{k=1}^{L-1} w^{2}_k}$ similarly. As another reminder, $A$ stands for the event of the condition $c$ being satisfied (acceptance in an iteration of the rejection sampling loop) and $\overline{A}$ stands for the event of $c$ not being satisfied. As stated before, we are computing the variance of the rejected sample weights which is ${\meanp{q}{\prod_{k=1}^{L-1} w^{2}_k} - \meanp{q}{\prod_{k=1}^{L-1} w_k}}$. We start with the second term.
    \begin{align}
        \meanp{q}{\prod_{k=1}^{L-1} w_k}
        &= \sum_{l=1}^{\infty} \mathbb{P}[L=l] \prod_{k=1}^{l-1}\meanp{z_k \sim q(z|x, y)}{w_k|\overline{A}}
        = \sum_{l=1}^{\infty} \mathbb{P}[L=l] \prod_{k=1}^{l-1}\meanp{z \sim q(z|x, y)}{w|\overline{A}}\label{eq:rejected-weights-mean}\\
        &= \sum_{l=1}^{\infty} q(A|x,y) q(\overline{A}|x,y)^{l-1} \prod_{k=1}^{l-1}\meanp{z \sim q(z|x, y)}{w|\overline{A}}\label{eq:rejected-weights-mean-2}
    \end{align}
    The first equality in \cref{eq:rejected-weights-mean} comes from the fact that $L=l$ means the rejection sampling loop was iterated $l$ times to get the first accepted sample and it implies that for all $1 \leq k \leq l-1$, $z_k$ is rejected. Hence, the inner expectation is conditioned on $\overline{A}$. Moreover, since the $z_k$ samples are independent given $x$ and $y$, the expectation commutes with the product. The second equality comes from the independence of $z_k$ samples too.

    Now we expand the last term in \cref{eq:rejected-weights-mean-2} i.e., $\meanp{z \sim q(z|x, y)}{w | \overline{A}}$,
    \begin{equation}
        \meanp{z \sim q(z|x, y)}{w | \overline{A}}
        = \meanp{z \sim q(z|x, y)}{\frac{p(z|x)}{q(z|x, y)} \frac{1 - c(x,z)}{q(\overline{A}|x, y)}}
        = \frac{1}{q(\overline{A}|x,y)} \meanp{z \sim p(z|x)}{1 - c(x, z)}
        = \frac{p(\overline{A}|x)}{q(\overline{A}|x, y)} \label{eq:rejected-weights-mean-step2}
    \end{equation}
    In the first equality in \cref{eq:rejected-weights-mean-step2}, $w = \frac{p(z|x)}{q(z|x, y)}$ and the other term accounts for conditioning on $\overline{A}$. In the second equality, we have assumed $q(z|x, y)$ is a valid importance sampling proposal for $p(z|x)$ i.e., if $\mathcal{Z}$ is the space of possible values for $z$,\footnote{Although in all of our experiments this assumption holds, it might not be true depending on the training scheme. Refer to \cref{app:training:ic-perfect} for a more detailed discussion.}
    $$\forall z \in \mathcal{Z}:\; p(z|x) > 0 \Rightarrow q(z|x, y) > 0$$
    Therefore, substituting Equation \ref{eq:rejected-weights-mean-step2} into \ref{eq:rejected-weights-mean-2},
    \begin{equation}
        \meanp{q}{\prod_{k=1}^{L-1} w_k} = q(A|x,y) \sum_{k=1}^{\infty} p(\overline{A}|x)^{k-1} = \boxed{\frac{q(A|x,y)}{p(A|x)}}\label{eq:app:proofs:variance:weight-mean}
    \end{equation}
    Next, we compute the expected value of squared of weights. It can be derived similar to \cref{eq:rejected-weights-mean,eq:rejected-weights-mean-2}
    \begin{equation}
        \meanp{q}{\prod_{k=1}^{L-1} w_{k}^2} = \sum_{l=1}^{\infty} q(A|x,y) q(\overline{A}|x,y)^{l-1} \prod_{k=1}^{l-1} \meanp{z \sim q(z|x,y)}{w^2|\overline{A}}\label{eq:rejected-weights-squared-mean}
    \end{equation}
    We now expand the last term in \cref{eq:rejected-weights-squared-mean} and, similar to \cref{eq:rejected-weights-mean-step2}, get the following,
    \begin{equation}
        \meanp{z \sim q(z|x,y)}{w^2|\overline{A}}
        = \meanp{z \sim q(z|x,y)}{\frac{p(z|x)^2}{q(z|x,y)^2} \frac{1-c(x,z)}{q(\overline{A}|x,y)}}
        = \frac{1}{q(\overline{A}|x,y)}\meanp{z \sim q(z|x,y)}{\frac{p(z|x)^2}{q(z|x,y)^2} p(\overline{A}|x, z)}
    \end{equation}
    For notational simplicity, define $S_{p, q} = \meanp{z \sim q(z|x,y)}{\frac{p(z|x)^2}{q(z|x,y)^2} p(\overline{A}|x, z)}$. Hence,
    \begin{equation}
        \meanp{q}{\prod_{k=1}^{L-1} w_{k}^2} = q(A|x,y) \sum_{l = 1}^{\infty} (S_{p, q})^{l-1}\label{eq:weight-squared-mean}
    \end{equation}
    Finally, according to \cref{eq:weight-squared-mean,eq:app:proofs:variance:weight-mean} the variance of the rejected sample weights is equal to
    \begin{equation}
        {\meanp{q}{\prod_{k=1}^{L-1} w^{2}_k} - \meanp{q}{\prod_{k=1}^{L-1} w_k}}
        = q(A|x,y) \sum_{l = 1}^{\infty} (S_{p, q})^{l-1} - \frac{q(A|x,y)}{p(A|x)} \label{eq:rejected-weight-variance}
    \end{equation}
    
    Since the second term in $\frac{p(A|x,y)}{p(A|x)}$ and in \cref{eq:rejected-weight-variance} is finite, if $q(A|x, y)\sum_{l = 1}^{\infty} (S_{p, q})^{l-1}$ is not finite, the variance of the rejected weights will be infinite. Additionally, since $S_{p,q}$ is independent of $q(A|x,y)$, it reduces to finiteness of $\sum_{l = 1}^{\infty} (S_{p, q})^{l-1}$.
    \begin{equation}
        \text{if } S_{p,q} \geq 1 \Rightarrow \sum_{l = 1}^{\infty} (S_{p, q})^{l -1} \text{ is infinite} \Rightarrow Var\left(\prod_{i=1}^{k-1} w_{i}\right) \text{ is infinite}
    \end{equation}
    Noting that if the variance of the weights of a subset of samples (the rejected samples in this case) is infinite, the variance of the whole weights would be infinite, completes the proof.
\end{proof}
Note that even though our proof was presented on the example program in \cref{alg:sample-program}, it is not limited to it. In general, in a program that contains multiple (even nested) rejection sampling loops, if the inequality in \cref{eq:infinity-variance-condition} holds for any of its rejection sampling loops, it makes the variance of the IC weights infinite.

\subsection{Proof sketch of ARS providing correct weights for more complex programs}\label{app:proofs:more-complex-programs}
There are many ways to do it, but one would be to follow denotational semantics \citep{scibior2017denotational}, which identifies probabilistic programs with functions from inputs to unnormalized measures on the outputs. This semantics is compositional, so showing that two programs denote the same measure implies full contextual equivalence. All four programs in \cref{fig:illustration} then define the same measure on $(x, z)$ for all $y$. Finally, by \cref{thm:equivalence} the program defined in \cref{alg:our-algorithm} denotes the same measure as well, so can be substituted for the original program in all contexts.

\section{A DISCUSSION ON TRAINING PROPOSALS}\label{app:training}
\subsection{Naive Weighting with Perfectly Trained Proposals}\label{app:training:ic-perfect}
In this part we investigate the validity of the existing IC weighting method (\cref{def:program}) under perfectly trained proposals.

As stated in the main text, in \cite{baydin2018efficient} the proposals are trained using only the samples that conclude the rejection loop, hence, the training data drawn from the original program (\cref{alg:sample-program}) has the same distribution as the training data from the collapsed program (\cref{alg:collapsed-program}). Therefore, the perfect proposal is the same for these two programs.

The proposals are trained by minimizing the expected forward KL between the posterior and the proposal,
\begin{align}
    q^* &= \argmin_{q}\meanp{p(y)}{\kl{p(x, z|y)}{q(x, z|y)}}. \label{eq:ic-training-objective}
\end{align}
Therefore, the perfect proposal $q^*$ matches the posterior of the collapsed program $q^*(x, z|y) = p(x, z|y)\; \forall y \in \mathcal{Y}$, where $\mathcal{Y}$ is the space of all observations that can be generated from the model. More formally, $\mathcal{Y}$ is the space of all $y$ such that ${p(y) = \int\int p(x,z,y) dz dx > 0}$.

Given an observation ${y\in \mathcal{Y}}$, define ${\gamma(x, z) = \gamma(x)\gamma(z|x)}$ to be the posterior in the collapsed program. Hence, ${q^*(x|y) = \gamma(x)}$ and ${q^*(z|x,y) = \gamma(z|x)}$, in both the collapsed and original programs. Now we can investigate if the condition in \cref{app:proofs:variance} holds with the perfect proposal:
\begin{equation}
    \meanp{z \sim q^*(z|x, y)}{\frac{p(z|x)^2}{q^*(z|x, y)^2} (1 - p(A|z,x))}
    = 
    \meanp{z \sim \gamma(z|x)}{\frac{p(z|x)^2}{\gamma(z|x)^2} (1 - p(A|z,x))}
\end{equation}
Since $\gamma(z|x)$ is the true posterior for the collapsed program, every sample drawn from it $z \sim \gamma(z|x)$ satisfies $c(x, z)$ therefore,
\begin{equation}
    p(A|z,x)=1,\; \forall z \sim \gamma(z|x)
    \Rightarrow \meanp{z \sim \gamma(z|x, y)}{\frac{p(z|x)^2}{\gamma(z|x, y)^2} (1 - p(A|z,x))} = 0 < 1
\end{equation}
So, if the proposals are trained perfectly, we would not have the problem of infinite variance. However, note that a perfectly trained proposal using only the accepted samples does not necessarily provide a valid importance sampling proposal for the distribution $p(z|x)$ i.e., $q(z|x, y)$ can have zero mass on parts of the space where $p(z|x) > 0$. Consequently, it can lead to biased estimates. This issue is illustrated in the following simple example.

\begin{example}
    Consider the following original and collapsed programs with the same format as Figure~\ref{alg:sample-program}.
    \begin{figure}[h]
        \begin{minipage}{.49\linewidth}
        \begin{lstlisting}[caption=Original, frame=tb, stepnumber=1]
x = sample(Uniform(low=1, high=2))
while True:
    rs_start()
    z = sample(Normal(mean=0, std=1))
    if z < 0:
        rs_end()
        break
observe(Normal(mean=z, std=x), y)
return x, z
        \end{lstlisting}
        \end{minipage}\hfill
        \begin{minipage}{.49\linewidth}
        \begin{lstlisting}[caption=Collapsed, frame=tb]
x = sample(Uniform(low=1, high=2))


z = sample(TruncatedNormal(mean=0, std=1, max=0))



observe(Normal(mean=z, std=x), y)
return x, z
        \end{lstlisting}
        \end{minipage}
        \end{figure}

Both of these programs implement the following generative model:
\begin{align*}
    &x \sim \text{Uniform}(1, 2)\\
    &z \sim \mN_{(-\infty, 0)}(0, 1)\\
    &y \sim \mN(z, x)
\end{align*}
Where $\mN_{(-\infty, 0)}$ is a truncated normal distribution with a maximum value of zero and unbounded minimum.

Since these programs are equivalent, inferences on them should have identical results\footnote{As long as it does not involve inference about the rejected samples}. However, under the perfect proposal assumption, although the proposal samples have the same distribution, the weights are different. For example, consider a sample $x^k \sim q^*(x|y) = \gamma(x)$ and $z^k \sim q^*(z|x^k, y) = \gamma(z|x^k)$, if we refer to the weights for the original and collapsed programs as $w_{IC}$ and $w_C$,
\begin{align}
    w_{IC} &= \frac{U(x^k; 0, 1)}{\gamma(x)} \frac{\mN(z^k; 0, 1)}{\gamma(z^k|x^k)}\\
    w_{C} &= \frac{U(x^k; 0, 1)}{\gamma(x)} \frac{\mN_{(-\infty, 0)}(z^k; 0, 1)}{\gamma(z^k|x^k)} \label{eq:perfect-proposal-example-w-c}
\end{align}
In \cref{eq:perfect-proposal-example-w-c}, $z^k \sim \gamma(z|x^k)$, $z^k < 0$, therefore, $\mN_{(-\infty, 0)}(z^k; 0, 1) = 2\mN(z^k; 0,1)$. Hence, $w_{C} = 2w_{IC}$ while in order to get the same result, importance sampling weights should be equal.
\end{example}

It is worth mentioning that in this simple program because the computed weights in this program differ by a multiplicative constant, the normalized weights would be equal (in case of self-normalized importance sampling). However, if the difference is state-dependent (i.e., the distribution of $z$ depends on the value of $x$ sampled at the previous step for example $z \sim \mN(x, 1)$ instead), self-normalization would not help.

It is important to note that in our experiments this problem of proposals having zero mass on the rejected subspace does not happen. In the particular inference network we used, all the proposals are guaranteed to have a support broader than or the same as the prior. It is made possible by choosing the parametrized family of proposal distributions from the ones that have broader support than the priors. However, because they are trained on the accepted samples only, once trained, they can put arbitrarily small mass on the rejected subspace. This in turn makes the quantity in \cref{app:proofs:variance} larger and eventually, makes the variance infinite.

\subsection{An Alternative Training Scheme}\label{app:training:alternative}
Following \cref{app:training:ic-perfect} where we showed inference with naive IC weighting (Definition~\ref{def:program}) can be biased when the proposals are trained only on the accepted samples, in this section we provide another training method that, at its optimal point, provides proposals under which estimates with IC weighting are guaranteed to be unbiased and finite variance. We argue that if the proposals are trained optimally on both the accepted and the rejected samples, the resulting IC estimator has finite variance.

\begin{theorem}
    Consider an inference compilation network trained on both accepted and rejected samples of a rejection sampling loop. If it is trained optimally, \cref{eq:infinity-variance-condition} does not hold for it.
\end{theorem}
\begin{proof}
    In this case, the perfectly trained proposal is:
    \begin{equation}
        q^*(z|x, y) = p(\overline{A}|x) p(z|x, \overline{A}) + p(A|x) \gamma(z|x)
    \end{equation}
    Where following the notation in \cref{app:training:ic-perfect}, $\gamma(z|x)$ is the posterior $p(z|x,y)$. In this situation,
    \begin{itemize}
        \item If $z \sim q^*(z|x, y)$ is in $A$ (i.e., accepted), $p(z|x, \overline{A}) = 0$. Hence, in this region, $q^*(z|x, y) = p(A|x) \gamma(z|x)$
        \item If $z \sim q^*(z|x, y)$ is in $\overline{A}$ (i.e., rejected), $\gamma(z|x) = 0$. Hence, in this region, $q^*(z|x, y) = p(\overline{A}|x) p(z|x, \overline{A})$
    \end{itemize}
    We abuse the notation and reuse $A$ and $\overline{A}$ to denote the sub-spaces of accepted and rejected $z$, given the previous sample $x$ and drop the dependence on $x$ for notational simplicity. We split the inequality in \cref{eq:infinity-variance-condition} to these two accepted and rejected sub-spaces
    \begin{align}
        \meanp{z \sim q^*(z|x, y)}{\frac{p(z|x)^2}{q^*(z|x, y)^2} (1 - p(A|z,x))}
        &= \int_{z \in A \cup \overline{A}}{\frac{p(z|x)^2}{q^*(z|x, y)^2} (1 - p(A|z,x))} q^*(z|x,y) dz\nn\\
        &= I_A(p, q^*) + I_{\overline{A}}(p,q^*)
    \end{align}
    Where $I_A(p, q^*)$ and $I_{\overline{A}}(p, q^*)$ denote the value of the integral on the accepted and rejected subspaces, respectively. They can be simplified by replacing $q^*(z|x,y)$ by their value on those regions, as stated above,
    \begin{equation}
        I_A(p, q^*) = \int_{z \in A}\frac{p(z|x)^2}{\textcolor{blue}{p(A|x)^2 \gamma(z|x)^2}} \underbrace{(1 - p(A|z,x))}_{0} p(A|x) \gamma(z|x) dx = 0
    \end{equation}
    \begin{align}
        I_{\overline{A}}(p, q^*)
        &= \int_{z \in \overline{A}} \frac{p(z|x)^2}{\textcolor{blue}{p(\overline{A}|x)^2 p(z|x, \overline{A})^2}} \underbrace{(1 - p(A|z,x))}_{1} p(\overline{A}|x) p(z|x, \overline{A}) dx\\
        &= \int_{z \in \overline{A}} \frac{p(z|x)^2}{p(\overline{A}|x)^2 p(z|x, \overline{A})^2} \underbrace{p(\overline{A}|x) p(z|x, \overline{A})}_{\neq 0} dx\\
        &= \int_{z \in \overline{A}} \frac{p(z|x)^2}{\textcolor{blue}{p(\overline{A}|x) p(z|x, \overline{A})}} dx
        = \int_{z \in \overline{A}} \frac{p(z|x)^2}{\underbrace{\textcolor{blue}{p(\overline{A}|x, z)}}_{1} \textcolor{blue}{p(z|x)}} dx\\
        &= \int_{z \in \overline{A}} p(z|x) dx \leq 1 \label{eq:I-rej-subspace}
    \end{align}
    If in \cref{eq:I-rej-subspace} equality holds, it means that $A = \emptyset$, hence, $p(A|x) = 0$, equivalently, the rejection sampling loop never terminates which is an invalid rejection sampler. Therefore, ${\meanp{z \sim q^*(z|x, y)}{\frac{p(z|x)^2}{q^*(z|x, y)^2} (1 - p(A|z,x))} < 1}$ and the variance is finite.
\end{proof}

However, since $q^*$ is a convex combination of $p(z|x, \overline{A})$ and $\gamma(z|x)$, depending on how efficient the rejection sampling loop is implemented by the user (i.e., how small $p(A|x)$ is) $q^*$ can be arbitrarily far from the correct posterior $\gamma(z|x)$. It is important to generate samples close to $\gamma(z|x)$ because it will generate high-likelihood samples. Additionally, such a proposal can be wasteful computationally; it can have high rejection rate and it depends on the user code and out of the control of the inference engine.
\par With all that said, we proved finite variance only at the optimal point i.e., if the inference network is imperfect, naive application of sequential importance sampling can still lead to an estimator with infinite variance. Therefore, even with this new training scheme, naive sequential importance sampling cannot be safely used with guaranteed consistency guarantee.

\section{IMPLEMENTATION DETAILS}\label{app:implementation-details}
In this section we provide more details our implementation of ARS. In particular, we informally explain (I) our addressing scheme for random variables in rejection loops, (II) training the network such that it is only trained on the accepted samples, (III) running the inference network at test time.

\subsection{Random variable addresses}
Following the notation of \citet{le2016inference}, an \emph{execution trace} of a probabilistic program is a sequence
\begin{equation}
    (x_t, a_t, i_t)_{t=1}^{T},
\end{equation}
where $x_t$, $a_t$, and $a_t$ are respectively the sampled value, address, and instance value (call number) of the $t^{\mathrm{th}}$ random variable in a given trace.
An address $a_t$ is a unique identifier automatically generated for each \lstinline{sample} statement in the program. An instance value $i_t$ is a counter of how many times a \lstinline{sample} statement is executed in the same program trace i.e., $i_t = \sum_{j=1}^t \mathds{1}(a_t = a_j)$.
The inference network then learns proposal distributions $q_{a, i}$ corresponding to the addresses $a$ for all \lstinline{sample} statements in the program and their instance values $i$. Therefore, each proposal distribution is identified by $(a, i)$. As explained in the paper, we train the same proposal distribution for different iterations of a rejection loop. Consequently, the addressing scheme should be modified to reflect this requirement.

\newcommand{\rstack}{\mathbf{s}}
In our implementation, we first extend the definition of addresses and instance values to cover \lstinline{rs_start} statements as well. We then define a trace for rejection sampling loops (denoted by RS trace) as a sequence of $(x_t, a_t, i_t)$, similar to the program traces, but $x_t$ denotes the iteration of the loop here. We then modify the definition of a program trace to a sequence
\begin{equation}
    (x_t, r_t, a_t, i_t),
\end{equation}
$r_t$ is the identifier (address and instance value) of the latest \emph{active} rejection sampling loop, or $\emptyset$ if no active rejection sampling loop exists at time $t$. An active rejection sampling loop is a loop that is started but is not concluded yet. In other words, \lstinline{rs_start} has been executed, but its corresponding \lstinline{rs_end} has not been reached yet.

To handle rejection sampling loops, we maintain a stack $\rstack_t$ of all active rejection sampling loops. At the beginning of program execution, $\rstack_0 = \emptyset$. Every time the program enters a \emph{new} rejection sampling loop, we push its identifier to $\rstack$ and pop from its top when executing \lstinline{rs_end}. When running \lstinline{rs_start} at time $t$, in order to detect if it is the start of new rejection sampling loop or retrying the last one, we compare the address the \lstinline{rs_start} statement $a_t$ with the top of the stack.
\begin{itemize}
    \item If the addresses do not match, we identify this as a new rejection sampling loop and push its identifier $(a_t, i_t)$ to the top of the stack: $\rstack_t = \rstack_{t-1} \oplus (a_t, i_t)$, where $\oplus$ denotes pushing to the top of the stack. 
    \item If the addresses match, it is a retry of the latest loop. Let $(x, a, i)$ be the last item in RS stack (we know $a_t = a$). We first increase $x$ in RS stack by one, denoting a new iteration of the loop. Then discard every item in program trace that has a rejection sampling identifier matching $(a, i)$ i.e., $\{(x_j, r_j, a_j, i_j): j < t, r_j = (a, i)\}$. Then we set $\rstack_t = \rstack_{t-1}$ and the program execution continues.
\end{itemize}
With this definition, $r_t = \emptyset$ if $\rstack_{t}$ is empty. Otherwise, it is the top element of $\rstack_{t}$.

Therefore, once we execute a program, its program traces only contain accepted samples since all the rejected samples are instantly discarded and removed from the trace. Moreover, since each execution of \lstinline{rs_start} for a new rejection sampling loop gets a unique identifier, it can uniquely identify all rejection sampling loops, even for complicated program structures such as nested loops.

\subsection{Training}
With our modified definition of program traces, we can use the traces after their execution is finished as training data for inference compilation network. Since the rejected samples are discarded, the traces are identical to the ones sampled from the ``collapsed'' program.

\subsection{Inference}
One of the most important points to consider when performing inference is to ensure proposal distributions for different iterations of the same rejection sampling loop are the same. To satisfy this constraint, we should recover the LSTM network's state when retrying a rejection sampling loop. To accommodate it, we store the LSTM's state at the time of starting a new rejection sampling loop in the RS trace. In other words, we modify the definition of RS trace to be a sequence of $(x_t, a_t, i_t, h_t)$ where $h_t$ is the LSTM state. Then, we can simply restore LSTM's state once a rejection sampling loop retry is detected.

\section{DETAILS OF EXPERIMENTS}\label{app:experiment-details}
Our experiments are implemented using PyProb. The architecture of our inference compilation network is the LSTM-based network introduced in \citep{le2016inference} and existing in PyProb. The proposal distributions for Normal distributions are mixtures of 10 Normals and for Uniform distributions are mixtures of 10 Beta distributions with the same support as the prior. All the training is done by Adam optimizer \citep{kingma2014adam}. To compute $\kconverge$, we choose $\epsilon = 0.01$, unless otherwise specified.

\subsection{Marsaglia}\label{app:experiment-details-marsaglia}
    \paragraph{Training} The inference compilation network trained in this experiment has a single layer, 512 dimensional LSTM and is trained on 2 million random draws from the model with a learning rate of $10^{-3}$ and batch size of $512$. The specific distribution parameters in our model is $\mu_0 = 0$, $\sigma_0 = 1$, $\sigma = 0.1$.

    \paragraph{Model} Implementation of this model (including rejection sampling tags) is shown in Program~\ref{code:marsaglia}. Observations used in the experiments are $y_1 = y_2 = 0$ in \cref{fig:results-marsaglia-mini_sherpa}~(top) and \cref{fig:app-more-results:marsaglia}~(top) and $y_1 = y_2 = -1$ in \cref{fig:app-more-results:marsaglia} (bottom).

    \begin{minipage}[tbp]{\columnwidth}
        \begin{lstlisting}[caption=The Marsaglia experiment, label=code:marsaglia]
def GUM(mu_0, sigma_0, sigma, y1, y2):
    while True:
        rs_start()
        x1 = sample(Uniform(-1, 1))
        x2 = sample(Uniform(-1, 1))
        s = x2**2 + x2**2
        if s < 1:
            rs_end()
            break
    mu = x * sqrt(-2*log(s)/s)
    observe(Normal(mean=mu, std=sigma), y1)
    observe(Normal(mean=mu, std=sigma), y2)
    return mu
        \end{lstlisting}
    \end{minipage}

\subsection{Beta-Bernoulli}
    \subsubsection{Acceptance Function}
    The acceptance function \texttt{c(x, u)} is chosen in a way that a sample $x$ drawn from the ``base distribution'' gets accepted with probability $\Big(x(1 + \frac{\beta}{\alpha})\Big)^{\alpha - 1} \Big((1-x) (1 + \frac{\alpha}{\beta})\Big)^{\beta - 1}$, therefore,
    $$c(x, u) := \mathds{1}_{u \in C(x)} \text{ where } C(x): \left\{z \in [0, 1]: z \leq \Big(x(1 + \frac{\beta}{\alpha})\Big)^{\alpha - 1} \Big((1-x) (1 + \frac{\alpha}{\beta})\Big)^{\beta - 1}\right\}.$$
    Note that $c(x,u)$ depends on the parameters $\alpha$ and $\beta$, but this dependence in suppressed in the notation for simplicity. In our experiment we assume $\alpha=\beta$. It simplifies $C(x)$ to
    $$C(x): \left\{z \in [0, 1]: z \leq \Big( 4 x(1-x) \Big)^{\alpha - 1}\right\}.$$
    
    \begin{figure}[b]
        \begin{minipage}{.49\linewidth}
        \begin{lstlisting}[caption=Original, frame=tb, stepnumber=1, label=code:app:beta-bernoulli-orig]
while True:
    rs_start()
    x = sample(Uniform(0, 1))
    u = sample(Uniform(0, 1))
    if c(x, u):
        rs_end()
        break
for i in range(n):
    observe(Bernoulli(x), y[i])
        \end{lstlisting}
        \end{minipage}\hfill
        \begin{minipage}{.49\linewidth}
        \begin{lstlisting}[caption=Collapsed, frame=tb, label=code:app:beta-bernoulli-collapsed]
x = sample(Beta(alpha, beta))






for i in range(n):
    observe(Bernoulli(x), y[i])
        \end{lstlisting}
        \end{minipage}
        \caption{Probabilistic programs implementing the Beta-Bernoulli model.} \label{fig:beta-bernoulli-programs}
        \end{figure}

    \subsubsection{Proposal Distributions}
    We mentioned in the main paper that the proposals are chosen manually. Here we explain in more detail how the proposals are chosen.
    
    \paragraph{True posterior for $x$} Consider the probabilistic program as shown in \cref{fig:beta-bernoulli-programs}. We show the program in both ``original'' and ``collapsed'' versions (with the terminology from \cref{fig:illustration}.) We know the true posterior for the latent variable $x$ in Program~\ref{code:app:beta-bernoulli-collapsed} is $\text{Beta}(\alpha + n, \beta)$, since all the observations are ``True''.

    \paragraph{True posterior for $u$} Considering Program~\ref{code:app:beta-bernoulli-orig}, the true posterior of $u$ given the set of observations $\yyy = \{y_i\}_{i=1}^{n}$ and the previously sampled latent variable $x$ is $p(u | \yyy, x) = \text{Uniform}\left(0, \Big( 4 x(1-x) \Big)^{\alpha - 1} \right)$.

    Having these true posterior distributions in mind, we choose (nearly) perfect proposal distributions in our experiment, as summarized in \cref{tab:app:beta-bern-proposals}.

        \begin{table}[t]
            \centering
            \begin{tabular}{ |c||c|c| } 
             \hline
             Parameters & $q(x|\yyy)$ & $q(u|\yyy, x)$ \\
             \hline
             $\alpha = \beta = 2$ & $\text{Beta}(\alpha + n, \beta)$ & \text{Uniform}(0, 1) \\ 
             $\alpha = \beta = 10$ & $\text{Beta}(\alpha + n, \beta)$ & $\text{Mixture}_{0.99, 0.01} \left[ \text{Uniform}\left(0, \Big( 4 x(1-x) \Big)^{\alpha - 1} \right), \text{Uniform} (0, 1) \right]$ \\
             \hline
            \end{tabular}
            \caption{Manually chosen proposal distributions in the Beta-Bernoulli experiment. The first row corresponds to \cref{fig:results_beta} in the main text and \cref{fig:app-more-results:beta} (left) in \cref{app:more-results}. The second row corresponds to the additional results plot in \cref{fig:app-more-results:beta} (right). In this table, $\text{Mixture}_{0.99, 0.01}$ means a mixture of two distributions with $0.99$ and $0.01$ being the probabilities of each of mixtures respectively. This Mixture distribution is necessary to ensure $p$ is absolutely continuous with respect to $q$.}
            \label{tab:app:beta-bern-proposals}
        \end{table}

\subsection{Mini-SHERPA}
    In this experiment, the inference compilation has a 3-layer layer LSTM with dimension of 512. The network is trained on 2 million random draws from the model with a learning rate of $10^{-3}$ and a batch size of 512.

    The observation in this experiment is a noisy measurement of the energy dispatched by the simulated particles. Figure~\ref{fig:app:more-results:mini-sherpa-process} (bottom) shows the observations used in our Mini-SHERPA experiment. Each observation is a $20 \times 20$ image. The observation noise is a zero-mean independent multivariate Gaussian on image pixels.

    The implementation of this experiment, excluding the unnecessary simulation details is shown in Program~\ref{code:mini-sherpa}.

    \begin{figure}[btp]
        \centering
        \includegraphics[width=0.27\columnwidth,valign=c]{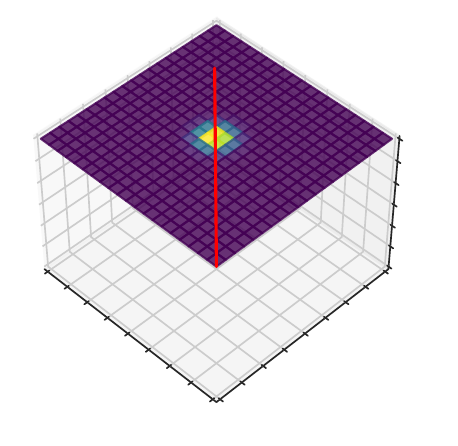}\hfill
        \includegraphics[width=0.27\columnwidth,valign=c]{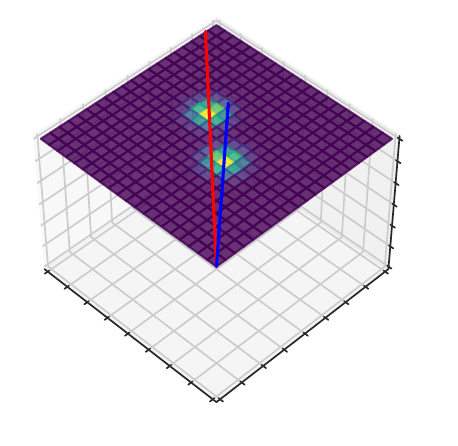}\hfill
        \includegraphics[width=0.27\columnwidth,valign=c]{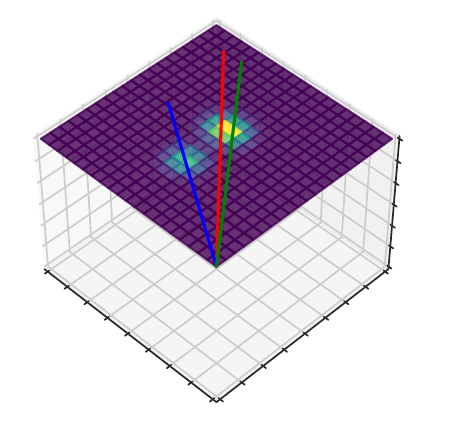}\hfill
        \includegraphics[width=0.27\columnwidth,trim=190 0 0 0, clip,valign=c]{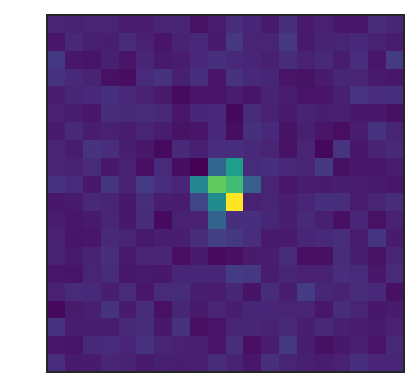}\hfill
        \includegraphics[width=0.27\columnwidth,trim=190 0 0 0, clip,valign=c]{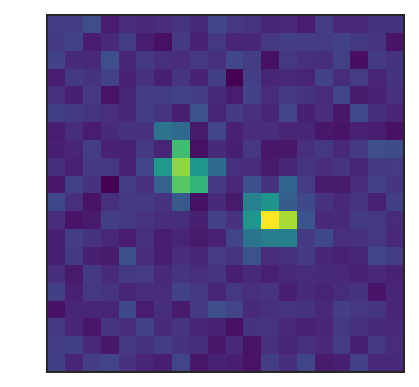}\hfill
        \includegraphics[width=0.27\columnwidth,trim=190 0 0 0, clip,valign=c]{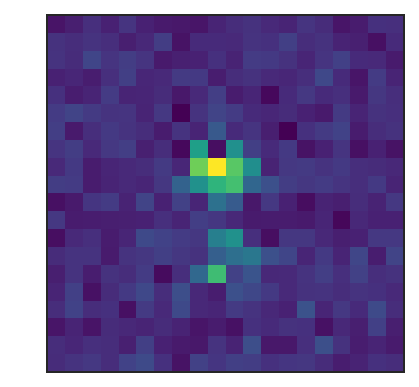}
        \caption{Simulation process (top row) and noisy observations (bottom row) in our Mini-SHERPA experiment. Rows correspond to events of channel 1, 2, and 3, respectively from left to right. These are the observations used in this paper. \cref{fig:results-marsaglia-mini_sherpa} (bottom row) corresponds to the left column and \cref{fig:app:more-results:mini-sherpa-results} corresponds to the next two columns.}
        \label{fig:app:more-results:mini-sherpa-process}
    \end{figure}

    \begin{figure}[htbp]
        \centering
        \includegraphics{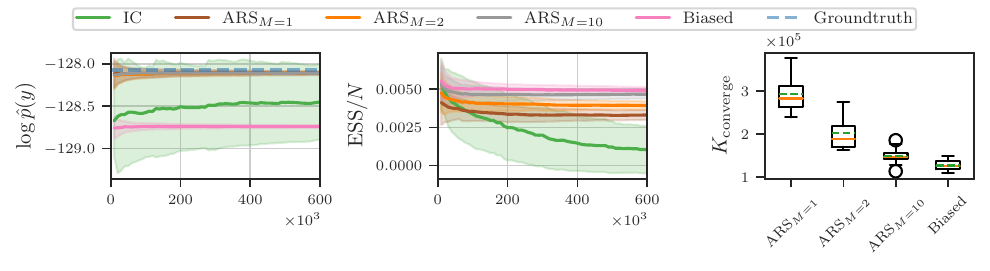}
        \includegraphics[trim=0 5 0 15,clip]{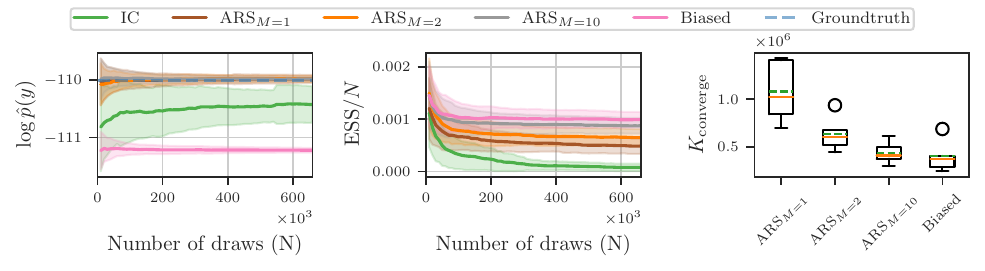}
        \caption{Experimental results for the Mini-SHERPA experiment with a channel 2 (top) and channel 3 (bottom) observation. For the convergence plot on the bottom row, since the methods are slow to converge, we choose $\epsilon = 0.03$.}
        \label{fig:app:more-results:mini-sherpa-results}
    \end{figure}

    \begin{minipage}[tbp]{\columnwidth}
        \begin{lstlisting}[caption=The Mini-SHERPA experiment, label=code:mini-sherpa]
def rejection_sample(scale):
    y_max = 1/scale
    while True:
        rs_start()
        x = sample(Uniform(-scale, scale))
        a = sample(Uniform(0, y_max))
        if a <= 1/scale*abs(x/scale):
            rs_end()
            return x
def Mini_SHERPA(obs):
    channel = sample(Categorical([1/3, 1/3, 1/3]))
    momentum_x = sample(Uniform(-0.5, 0.5))
    momentum_y = sample(Uniform(-0.5, 0.5))
    momentum_z = sample(Uniform(10, 20))
    thetas, phis = [], []
    for i in range(channel)
        thetas.append(rejection_sample(pi/4))
        phis.append(sample(Uniform(0, 2*pi)) for i in range(channel))
    deposits = simulate(momentum_x, momentum_y, momentum_z, thetas, phis)
    likelihood = Normal(deposits, max(sqrt(deposits, 0.3)))
    observe(likelihood, obs)
    return momentum_x, momentum_y, momentum_z, channel, deposits
        \end{lstlisting}
    \end{minipage}
In Program~\ref{code:mini-sherpa}, the function \texttt{simulate} computes the resulting particle momentums, simulates the resulting particles and renders the $20\times 20$ image of dispatched energies.

\section{MORE EXPERIMENTAL RESULTS}\label{app:more-results}

\subsection{Collapsed weighting}
In all experiments presented in this paper, the rejection sampling loops are simple enough to admit tractable ``collapsed'' distributions. The collapsed distribution is a Gaussian in the Maraglia experiment and a Beta in the Beta-Bernoulli experiment. In Mini-SHERPA, it is a custom distribution with tractable density and CDF function, therefore, we can implement a custom distribution for it.
One might argue that we can ask the user to implement the collapsed program instead of using rejection sampling and avoid the problems discussed in the paper. In this section we investigate how feasible it is and provide additional results to compare performance of collapsed weighting and ARS.

First, we remind the reader that ARS depends on the acceptance probability of the loops and not other features of them.
Therefore, ARS performs similarly in more complicated programs with intractable rejection loops.
Second, it is important to note that in many situations, the collapsed program is intractable.
For example, consider a generator of LaTeX source codes that compile without error. Therefore, it is essential for universal PPLs to provide the tooling to handle such cases without relying on the user to implement the collapsed program.

Nonetheless, in this section we provide additional results to compare performance of ARS compared to collapsed weighting.
However, re-implementing the program in its collapsed form changes the number of random variables and distribution types (for example, in the Marsaglia experiment, two Uniform random variables inside the rejection loop will be replaced by one Gaussian random variable). It will in turn reduce the variance of the estimator due to having fewer number of random variables and usually simpler distributions. It also requires re-training the inference network. Such differences improve performance of the collapsed model for reasons unrelated to the weighting.

For a fair comparison, we keep the model unchanged, but estimate or analytically compute (where applicable) the quantities of interest $p(z|x, A)$ and $z(z|x, A, y)$ and compute collapsed weighting according to its definition in \cref{eq:weight}. Results of this experiment are labelled ``Collapsed'' in \cref{fig:app-more-results:marsaglia,fig:app-more-results:beta}.

\begin{figure}[tbp]
    \centering
    \includegraphics{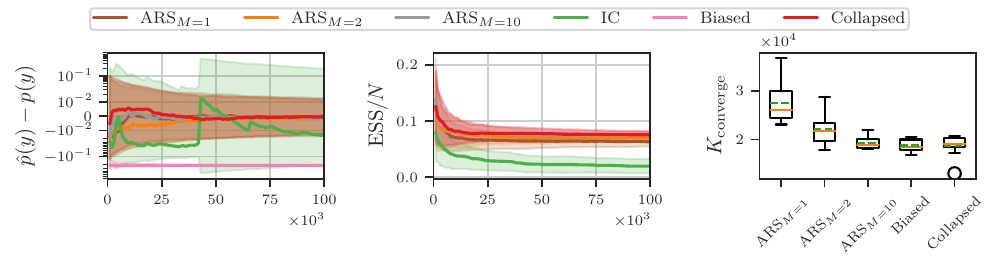}
    \includegraphics[trim=0 0 0 15,clip]{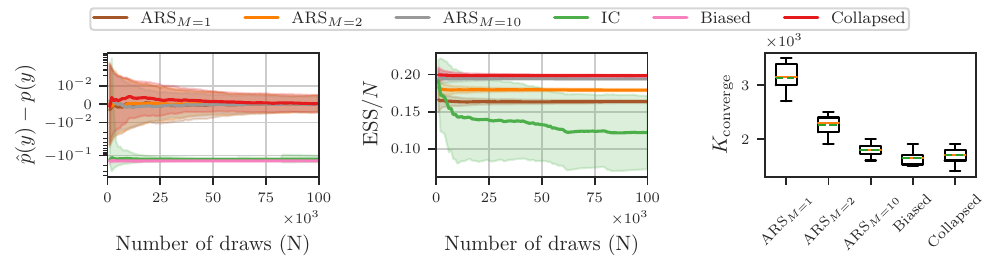}
    \caption{Additional results from the Marsaglia experiment. (Top) is the same as \cref{fig:results-marsaglia-mini_sherpa}, but includes the collapsed weighting as well. (Bottom) has a different observation. Observations are  ($y_1 = y_2 = 0$) and ($y_1 = y_2 = -1$) respectively for the top and bottom row.}
    \label{fig:app-more-results:marsaglia}
\end{figure}

\subsection{Additional results}
Here we provide additional experimental results for the three models considered in the main text, but the different observations. \cref{fig:app-more-results:marsaglia,fig:app:more-results:mini-sherpa-results,fig:app-more-results:beta} show additional results for the Marsaglia, Mini-SHERPA, and Beta models.

Since the lines in evidence and ESS plots are tightly clustered, we summarize the final mean and standard deviation of each line for each of Marsaglia and Mini-SHERPA experiments in \cref{fig:app-more-results:results-table}.

\begin{figure}[tbp]
    \centering
    \includegraphics[trim=0 130 0 0,clip]{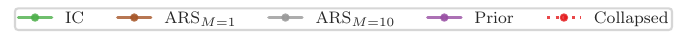}
    \includegraphics[trim=50 0 50 30,clip]{figs/exp_beta_a_collapsed.pdf}
    \includegraphics[trim=65 0 50 30,clip]{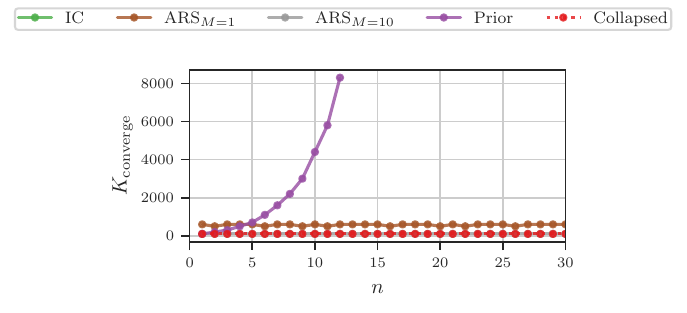}
    \caption{Additional results from the Beta experiment. (Left) is the same as \cref{fig:results_beta}, but includes the collapsed weighting. (Right) has a different set of observations and proposal distributions as explained in \cref{tab:app:beta-bern-proposals}.}
    \label{fig:app-more-results:beta}
\end{figure}

\begin{figure}[btp]
    \centering
    \begin{tabular}{c|c|c|c|c|c|c|c}
        \multicolumn{1}{c}{} & & IC                  & ARS,M=1              & ARS,M=2              & ARS,M=10 & Biased & Collapsed \\\hline
        \multirow{2}{*}{Marsaglia - a} & Bias ($\times 10^{-4}$) & $-151$ & $-1.3$  & $-16$  & $-14$ & $-2247$ & $-5$\\
        &$\mathrm{ESS} / N$ ($\times 10^{-3}$)             & $20$      & $63$       & $70$        & $75$ & $76$ & $76$\\\hline
        \multirow{2}{*}{Marsaglia - b} & Bias ($\times 10^{-4}$) & $-1285$ & $-0.5$  & $-0.5$  & $-1447$ & $2.3$\\
        &$\mathrm{ESS} / N$ ($\times 10^{-2}$)             & $12$      & $16$       & $18$        & $20$ & $20$\\
        \hline\hline 
        \multirow{2}{*}{Mini-SHERPA - 1} &Bias ($\times 10^{-4}$) & $-102$ & $-2.3$  & $-2.3$  & $-2.5$ & $85$ & -\\
        & $\mathrm{ESS} / N$ ($\times 10^{-2}$)              & $79$      & $83$       & $83$        & $84$ & $85$ & -\\\hline
        \multirow{2}{*}{Mini-SHERPA - 2} &Bias ($\times 10^{-2}$) & $-38$ & $-3.7$  & $-3.6$  & $-3.7$ & $67$ & -\\
        & $\mathrm{ESS} / N$ ($\times 10^{-3}$)             & $1.0$      & $3.3$       & $3.9$        & $4.7$ & $4.9$ & -\\\hline
        \multirow{2}{*}{Mini-SHERPA - 3} &Bias ($\times 10^{-2}$) & $-42$ & $2.6$  & $2.8$  & $2.6$ & $-120$ & -\\
        & $\mathrm{ESS} / N$ ($\times 10^{-4}$)              & $0.8$      & $5$       & $6$        & $9$ & $10$ & -\\
        \hline 
    \end{tabular}%
    \caption{Each row block shows estimation bias and average ESS of an experiment at the right-most point of their corresponding plot. ``Marsaglia - a'' corresponds to \cref{fig:app-more-results:marsaglia} (top) and \cref{fig:results-marsaglia-mini_sherpa} (top). ``Marsaglia - b'' corresponds to \cref{fig:app-more-results:marsaglia} (bottom). ``Mini-SHERPA - 1'' corresponds to \cref{fig:results-marsaglia-mini_sherpa} (bottom). ``Mini-SHERPA - 2'' and ``Mini-SHERPA - 3'' correspond to the top and bottom row of \cref{fig:app:more-results:mini-sherpa-results}}
    \label{fig:app-more-results:results-table}
\end{figure}

\end{document}